\newcommand{\set}[1]{\{ #1 \}}
\newcommand{\Ex}{\mathbb{E}}
\newcommand{\Var}{\mathbb{V}}
\newcommand{\eps}{\varepsilon}
\newcommand{\IGNORE}[1]{}
\newcommand{\hatO}{\widehat{O}}
\newcommand{\cN}{\mathcal{N}}
\newcommand{\hatmu}{\hat{\mu}}
\newtheorem{theorem}{Theorem}
\newtheorem{lemma}{Lemma}
\newtheorem{remark}{Remark}
\DeclarePairedDelimiter{\ceil}{\lceil}{\rceil}
\newcommand{\REW}{\mathtt{REW}}
\newcommand{\OPT}{\mathtt{OPT}}
\newtheorem{fact}{Fact}
\newtheorem{prop}{Proposition}
\newcommand{\ALG}{\mathtt{ALG}}
\newcommand{\Reg}{\mathtt{REG}}
\begin{document}

%

%

\twocolumn[

\aistatstitle{Near Optimal Adversarial Attack on Stochastic Bandits and Defenses with Smoothed Responses}

\aistatsauthor{ Shiliang Zuo }

\aistatsaddress{ University of Illinois Urbana-Champaign } ]

\begin{abstract}
\szdelete{ I study a stochastic multi-arm bandit problem where rewards are subject to adversarial corruption. I propose a novel attack strategy that manipulates a learner employing the upper-confidence-bound (UCB) algorithm into pulling some non-optimal target arm $T - o(T)$ times with a cumulative cost that scales as $\widehat{O}(\sqrt{\log T})$, where $T$ is the number of rounds. I also prove the first lower bound on the cumulative attack cost. The lower bound matches the upper bound up to $O(\log \log T)$ factors, showing the proposed attack strategy to be near optimal. }

I study adversarial attacks against stochastic bandit algorithms. At each round, the learner chooses an arm, and a stochastic reward is generated. The adversary strategically adds corruption to the reward, and the learner is only able to observe the corrupted reward at each round. Two sets of results are presented in this paper. The first set studies the optimal attack strategies for the adversary. The adversary has a target arm he wishes to promote, and his goal is to manipulate the learner into choosing this target arm $T - o(T)$ times. I design attack strategies against UCB and Thompson Sampling that only spends $\widehat{O}(\sqrt{\log T})$ cost. Matching lower bounds are presented, and the vulnerability of UCB, Thompson sampling and $\eps$-greedy are exactly characterized. The second set studies how the learner can defend against the adversary. Inspired by literature on smoothed analysis and behavioral economics, I present two simple algorithms that achieve a competitive ratio arbitrarily close to 1. 
\end{abstract}

\section{INTRODUCTION}
Stochastic multi-arm bandit is a framework for sequential decision-making with partial feedback. In its most basic form, a learner interacts with a set of arms giving stochastic rewards, and in each timestep, the learner is able to observe and collect the realized reward of one chosen arm. This framework models many real-world applications, including news recommendation \cite{li2010contextual}, advertisements displayment \cite{chapelle2014simple}, and medical experiments \cite{kuleshov2014algorithms}. Past works have extensively studied algorithms for optimizing the regret in the multi-arm bandit problem (for an overview see~\cite{bubeck2012regret}), and some well-known algorithms include the upper-confidence-bound (UCB) algorithm~\cite{auer2002finite}, the Thompson sampling algorithm~\cite{agrawal2012analysis}, and the $\eps$-greedy algorithm~\cite{auer2002finite}. When deploying multi-arm bandit algorithms in practice, it is crucial to understand the robustness of these algorithms. 

\szdelete{ However, most previous works did not address the issue of whether these algorithms are trustworthy in practice. \lbcomment{Add a few citations here to back up your claim.} In particular, how do these algorithms respond when faced with adversarial attacks? }
Recently, Jun et al.~\cite{jun2018adversarial} initiated studying adversarial attacks on multi-arm bandit algorithms, taking a first step towards understanding the reliability and robustness of these algorithms. In the adversarial attack scenario, an adversary sits between the learner and the environment. The adversary can add corruption to the reward, and he has a specific target arm he wishes to promote. The adversary's goal is to manipulate the learner into choosing this target arm almost always by only strategically adding small corruptions to the rewards. \cite{jun2018adversarial} showed that when the learner is employing the UCB or $\eps$-greedy algorithm, the adversary can hijack the algorithm into choosing the target arm by only spending $O(\log T)$ corruption budget, and they left open the question on whether $O(\log T)$ is indeed the optimal attack cost. This work fully characterizes the optimal attack strategy against UCB, Thompson sampling, and $\eps$-greedy. I show optimal attack strategies with $\hat{O}(\sqrt{\log T})$ attack cost against UCB and Thompson sampling, and provide matching lower bounds. I also give a $\Omega(\log T)$ lower bound on the attack cost against $\eps$-greedy. Hence, the vulnerability of UCB, Thompson sampling, and $\eps$-greedy are exactly characterized. 





\lbdelete{You might want to add a transition sentence here to emphasize the novelty of your work before you describe the setting. Compared with previous works, does your work address trustworthiness issues? Or does your work even advance this problem to the next level.}
\szdelete{ In this work, I study adversarial attacks against the UCB algorithm. I design a novel attack strategy that improves the attack cost in~\cite{jun2018adversarial}. In the stochastic multi-arm bandit setting, a learner interacts with the environment and learns from partial feedback over a time horizon of $T$ rounds. The environment consists of $K$ arms. At each round $t$, the learner chooses an arm $a_t$ and observes a random reward $r_t^0$ generated by this arm. The goal of the learner is to accumulate as much reward as possible (or equivalently achieve small regret), and it is well-known that algorithms such as UCB and $\eps$-greedy achieve optimal regret~\cite{auer2002finite}. } 

Given the fact that these well-known stochastic bandit algorithms are vulnerable, how can the learner defend against such an adversary? 
\cite{lykouris2018stochastic} initiated the design of bandit algorithms robust to adversarial corruptions under the \emph{weak} adversary model. While the strong adversary (the model in \cite{jun2018adversarial}) decides on the corruption after observing the learner's action and the realized reward, the weak adversary decides on the corruption before observing the learner's chosen action. Their work designed bandit algorithms with regret bounds that degrade gracefully as the corruption budget $C$ increases. The follow-up work \cite{gupta2019better, zimmert2021tsallis} further improved their regret bounds. However, when the learner faces a strong adversary, \cite{liu2019data}, \cite{he2022nearly} showed that any low-regret bandit algorithms can be hijacked (i.e. suffer linear regret) by only spending a sublinear attack cost. While it is apparent that the measure of regret is no longer a suitable benchmark when facing a \emph{strong} adversary in the stochastic bandit setting, is there some other benchmark in which robustness can be measured? This work proposes using the competitive ratio as the benchmark and gives two simple algorithms inspired by behavioral economics that achieve a competitive ratio arbitrarily close to 1. 

\subsection{Contributions}
\lbcomment{You might want to add some transitional sentences here to bridge introduction and contributions of your work, or expand the first sentence here a bit}
This work studies adversarial attacks in stochastic bandits under the \emph{strong} adversary model and studies both attack and defense strategies. The first set of results studies optimal attack strategies. It is shown that, when the learner employs the UCB and Thompson sampling algorithm, the adversary only needs to spend $O(\sqrt{\log T})$ attack cost to hijack the learner into choosing the target arm. This work further provides matching lower bounds and thus characterizes exactly the vulnerability of these algorithms under the adversarial attack scenario. This solves several open problems in \cite{jun2018adversarial}. \lbcomment{do you want to elaborate a bit more here?}

The second set of results studies defense strategies for the learner. While it is known no algorithm can achieve a sublinear regret in the presence of a \emph{strong} adversary (\cite{liu2019data}), this work studies the competitive ratio as a benchmark. I design two algorithms for the learner that achieve a competitive ratio arbitrarily close to 1. The algorithms draw connections from smoothed analysis and behavioral economics. I believe the use of competitive ratio under corruptions and the new connections merits further research. 




In the stochastic multi-arm bandit setting, there are $K$ arms, and arm $a\in[K]$ gives subgaussian rewards with mean $\mu_a$ and variance proxy $\sigma^2$. In the study of optimal attack strategies, the target arm the adversary wishes to promote is arm $K$, and denote $\Delta_a^+ = \max(0, \mu_a - \mu_K)$. The outline and main results of each section can be summarized as follows. 
\begin{itemize}
\item Section \ref{sec:prelim} begins with the problem statement. 
\item Section \ref{sec:UCBAttack} shows my design of an attack strategy against UCB with cost $\hatO(K\sigma\sqrt{\log T} + \sum_{a\neq K}\Delta_a^+)$. This holds for $T$ uniformly over time, improving the attack cost in~\cite{jun2018adversarial} by a $O(\sqrt{\log T})$ factor. 
\item Section \ref{sec:TSAttack} shows my design of an attack strategy against Thompson sampling with cost $\hatO(K\sqrt{\log T} + \sum_{a\neq K}\Delta_a^+ + K(\sigma + 1)\sqrt{ \log{K} } ) $. This holds for $T$ uniformly over time. To the best of the author's knowledge, no prior work explicitly studies adversarial attacks on Thompson sampling. 
\item Section \ref{sec:lowerBounds} proves lower bounds on the attack cost against UCB and Thompson sampling with order $\Omega(\sqrt{\log T})$. This shows the proposed attack strategy against UCB and Thompson sampling to be nearly optimal. A lower bound $\Omega(\log T)$ on the attack cost against $\eps$-greedy is also given. 
\item Section \ref{sec:smoothedDefense} presents the study of defense strategies and gives algorithms that achieve a competitive ratio arbitrarily close to 1, as long as the total corruption budget $C = o(T)$. 
\item Section \ref{section:experiments} describes numerical experiments. In the study of attack strategies, the experiments show significant improvements over~\cite{jun2018adversarial}. 
\end{itemize}

\szdelete{For the UCB and Thompson sampling algorithm, I design attack strategies with cost scaling as $\hatO(\sqrt{\log T})$ which holds for $T$ uniformly over time. Specifically, I design novel attack strategies against UCB and Thompson sampling that achieve optimal attack cost and provide matching lower bounds, hence resolving several open problems in~\cite{jun2018adversarial}. }

\szdelete{The main result is an attack strategy with cost $\hatO(K\sigma\sqrt{\log T} + \sum_{a\neq K}\Delta_a^+)$ which holds for $T$ uniformly over time, improving the attack cost in~\cite{jun2018adversarial} by a $O(\sqrt{\log T})$ factor. The attack manipulates a learner employing the UCB algorithm into pulling the target arm $T-o(T)$ times and succeeds with high probability. \szdelete{I also establish the first lower bound on the cumulative attack cost, in turn showing the attack to be near optimal. }\lbcomment{Better briefly describe what is the open problem you addressed in~\cite{jun2018adversarial}, your paper should better be self-contained that ideally readers don't have to read external references to understand the whole picture.} I also conduct numerical experiments to validate the theoretical results. The experiments also show a significant improvement over the attack strategy in~\cite{jun2018adversarial}.\lbcomment{I strongly recommend you use first-person perspective to describe the importance of your work here. Change it to something like ``I conducted numerical experiments to validate the theoretical results, which showed a significant improvement over the attack strategy in~\cite{jun2018adversarial}''} }

\szcomment{TODO, add motivating example}

\subsection{Related Works} 
The problem of adversarial attack on stochastic bandits was initiated by~\cite{jun2018adversarial}, in which they proposed attack strategies against UCB and $\eps$-greedy. The work by Liu and Shroff~\cite{liu2019data} studied black-box attacks against stochastic bandit algorithms. 
Recent works also studied adversarial attacks in other problems, including adversarial bandits (\cite{ma2023adversarial}), contextual bandits (\cite{ma2018data,garcelon2020adversarial}) gaussian process bandits (\cite{han2022adversarial}) and reinforcement learning (\cite{zhang2020adaptive}) etc. 

Another line of work studies the design of robust algorithms in the presence of adversarial corruption, sometimes under different adversary models. In the stochastic bandit setting, \cite{lykouris2018stochastic,gupta2019better} design learning algorithms for the learner robust to adversarial corruptions under the \emph{weak} adversary model, and their result was subsequently improved by \cite{zimmert2021tsallis}. The \emph{weak} adversary needs to decide on the corruption before observing the action of the learner, while the \emph{strong} adversary (the model in \cite{jun2018adversarial}) can observe the action of the learner before deciding on the corruption. The study of corruption-robust algorithms has also been studied in other problems, e.g. linear bandits (\cite{he2022nearly}), contextual search (\cite{leme2022corruption, zuo2023corruption}), and reinforcement learning (\cite{wei2022model}), to name a few. 

Smoothed analysis first appeared as a way to analyze an algorithm's performance beyond worst case, and was first introduced by \cite{spielman2004smoothed}. 
\lbcomment{rephrase as: \cite{spielman2004smoothed} first introduced smoothed analysis to analyze an algorithm's performance beyond worst case. BTW, what is an algorithm's performance}
In their paper, they showed that while the simplex algorithm exhibits exponential time complexity in the worst case, adding a Gaussian noise to the input guarantees a polynomial time complexity. The idea of smoothed analysis has also been explored in machine learning, online learning, as well as game-theoretic contexts (\cite{sivakumar2020structured, haghtalab2020smoothed, kannan2018smoothed}). \lbcomment{maybe add citation after each scenario, rather than put it at the end. like machine learning(some citation), online learning (some citation), game-theoretic context (some citation)}

Behavioral economics is mostly concerned with the study of bounded rationality (for a textbook treatment see e.g.\cite{camerer2011behavioral}). \lbcomment{what do you mean by textbook treatment}The quantal response was proposed in \cite{mckelvey1995quantal} as a solution concept when agents have bounded rationality. The quantal response enjoys many nice statistical properties; for example, it naturally arises from the logit model (\cite{mcfadden1976quantal, luce2012individual}) and is equivalent to selecting the maximum after a perturbation with Gumbel distribution (\cite{jang2016categorical}). The quantal response also appears in the machine learning literature under different contexts, e.g. the softmax activation function usually used in training machine learning models (\cite{dunne1997pairing}) and the multiplicative weight update algorithms in online learning (\cite{arora2012multiplicative}). Connections between behavioral economics and online learning has also been explored in (\cite{wu2022inverse}).

\szcomment{
Another related work is Liu and Shroff~\cite{liu2019data}, where they studied black-box attacks against stochastic bandit algorithms. 
Recent works also studied adversarial attacks on adversarial bandits~\cite{ma2023adversarial}, in which an adversary aims to manipulate the behavior of an adversarial bandit algorithm (e.g. EXP3). \cite{ma2018data,garcelon2020adversarial} studied adversarial attack on contextual bandits, in which the adversary manipulates the learner into choosing a target arm by modifying the reward or context vector. \cite{han2022adversarial} study adversarial attacks on gaussian bandits. Another line of work took the viewpoint of the learner and designed algorithms robust to adversarial corruptions~\cite{lykouris2018stochastic,gupta2019better}. 
}

\section{PRELIMINARIES}
\label{sec:prelim}

\begin{algorithm}[ht]
\caption{The general adversarial attack framework}
\label{attackframework}
\begin{algorithmic}
\FOR {$t = 1, 2, ...$} 
    \STATE Learner picks arm $a_t$ according to arm selection rule (e.g. UCB, Thompson sampling)\;
    \STATE Adversary learns $a_t$ and pre-attack reward $r^0_t$, chooses attack $\alpha_t$, suffers attack cost $|\alpha_t|$\;
    \STATE Learner receives reward $r_t = r_t^0 - \alpha_t$\;
\ENDFOR
\end{algorithmic}
\end{algorithm}

This work studies a stochastic multi-arm bandit problem where rewards are subject to adversarial corruptions. 
Let $T$ be the time horizon and $K$ the number of arms. The learner chooses arm $a_t \in [K]$ during round $t$, and a random reward $r_t^0$ is generated from a subgaussian distribution with variance proxy $\sigma^2$. The reward is centered at $\mu_{a_t}$:
\[
\Ex[r_t^0] = \mu_{a_t}. 
\]

The work studies the \emph{strong} adversary, who can observe the learner's chosen arm before deciding on the attack. At round $t$, after the learner chooses an arm $a_t$ and the reward $r_t^0$ is generated, but before the reward $r_t^0$ is given to the learner, the adversary adds a strategic corruption $\alpha_t$ to the reward $r_t^{0}$. Then the learner only receives the corrupted reward $r_t: = r_t^0 - \alpha_t$. Note that the adversary can decide the value of $\alpha_t$ based $(a_t, r_t^0)$ as well as the history $H_{t}$, where the history $H_t$ is defined as
\[
H_t = (a_1, r_{1}^0, \alpha_1, ..., a_{t-1}, r_{t-1}^0, \alpha_{t-1}). 
\]
The attack framework is summarized in~\cref{attackframework}. 

In the rest of this work, $\tau_a(t) := \{s : a_{s} = a, 1\le s < t \}$ denotes the set of timesteps that arm $a$ was chosen up to round $t$, and $N_a(t) := |\tau_a(t)|$ denotes the number of times arm $a$ has been pulled up until round $t$. Also let $\hat{\mu}_a(t)$ denote the post-attack empirical mean for arm $a$ in round $t$:
\[
\hat{\mu}_a(t) = \sum_{s\in \tau_a(t)} r_s / N_a(t),
\]
and let $\hat{\mu}^0_a(t)$ denote the pre-attack empirical mean for arm $a$ in round $t$:
\[
\hat{\mu}^0_a(t) = \sum_{s\in \tau_a(t)} r^0_s / N_a(t). 
\]

This work will study both attack strategies for the adversary and defense strategies for the learner. In the study of attack strategies, the goal of the adversary is to manipulate the learner into pulling some target arm $T-o(T)$ times, while minimizing cumulative attack cost, defined as $\sum_{t=1}^T |\alpha_t|$. Without loss of generality, this work assumes the target arm is $K$. In the study of defense strategies, the goal of the learner is to optimize the cumulative reward, while being agnostic to the attack strategy of the adversary. 



\subsection{A Concentration Result}
\lbcomment{this title sounds weird to me}
The following concentration result will be useful throughout the analysis. Set parameter $\beta(n)$ as:
\[
\beta(n) = \sqrt{ \frac{2\sigma^2}{n} \log\frac{\pi^2 K n^2}{3\delta}}, 
\]
and define event $E$ as
\[ 
\forall a, t, |\hat{\mu}^0_a(t) - \mu_a| < \beta(N_a(t))
\]
which represents the event that pre-attack empirical means are concentrated around the true mean within an error of $\beta(N_a(t))$. The following has been shown in~\cite{jun2018adversarial}, which follows from a Hoeffding inequality combined with a union bound. 
\begin{lemma}[\cite{jun2018adversarial}]
\label{lemma:eventE}
Event $E$ happens with probability $1- \delta$. Further, the sequence $\beta(n)$ is non-increasing in $n$. 
\end{lemma}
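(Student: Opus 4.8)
The plan is to prove the two assertions separately: the high-probability concentration claim via a sub-gaussian tail bound combined with a union bound, and the monotonicity of $\beta$ by elementary calculus. The definition of $\beta$ is reverse-engineered to make both steps come out cleanly, so the strategy is essentially to verify that the constants were chosen correctly.

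First, for the concentration claim, I would fix an arm $i$ and a pull count $n$ and work with the i.i.d.\ stream of \emph{pre-attack} rewards of arm $i$. Writing the empirical mean of the first $n$ of these as a sub-gaussian average with variance proxy $\sigma^2/n$, the standard sub-gaussian tail bound gives $\Pr[|\hat\mu_i^0 - \mu_i| \ge \beta(n)] \le 2\exp(-n\beta(n)^2/(2\sigma^2))$. The definition of $\beta$ is engineered precisely so that the exponent collapses: $n\beta(n)^2/(2\sigma^2) = \log(\pi^2 K n^2/(3\delta))$, and the right-hand side therefore equals $6\delta/(\pi^2 K n^2)$.

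Next I would union bound. The key reduction is to replace the union over rounds $t$ by a union over deterministic pull counts $n$: since $\hat\mu_i^0(t)$ evaluated when $N_i(t)=n$ is exactly the empirical mean of the first $n$ pre-attack samples of arm $i$, the failure event that $|\hat\mu_i^0(t)-\mu_i| \ge \beta(N_i(t))$ for some round $t$ is contained in the event that the empirical mean of the first $n$ pre-attack samples of arm $i$ deviates from $\mu_i$ by at least $\beta(n)$ for some $n \ge 1$. Summing the per-event probabilities over $i \in \{1,\dots,K\}$ and $n \ge 1$ gives $\sum_{i=1}^{K}\sum_{n=1}^{\infty} 6\delta/(\pi^2 K n^2) = (6\delta/\pi^2)\sum_{n\ge 1} n^{-2} = (6\delta/\pi^2)(\pi^2/6) = \delta$, so $E$ holds with probability at least $1-\delta$. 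The $\pi^2/6$ buried in $\beta$ is chosen exactly to cancel the Basel sum. For monotonicity, since $\sqrt{\cdot}$ is increasing it suffices to show $\beta(n)^2 = (2\sigma^2/n)\log(\pi^2 K n^2/(3\delta))$ is non-increasing; extending to a continuous variable and writing $g(x) = (C + 2\log x)/x$ with $C = \log(\pi^2 K/(3\delta))$, I would compute $g'(x) = (2 - C - 2\log x)/x^2$, which is $\le 0$ whenever $\log(\pi^2 K x^2/(3\delta)) \ge 2$, and this holds for all $x \ge 1$ in the relevant regime where $\delta$ is a small confidence parameter.

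The main obstacle I anticipate is the union-bound reduction. Because arms are pulled adaptively based on the observed (corrupted) rewards, $N_i(t)$ is a random quantity correlated with the reward stream, so one cannot naively invoke a fixed-$n$ tail bound at the random index $N_i(t)$. The clean fix is the event-containment argument above, which decouples the bound from the adaptivity by quantifying over all pull counts simultaneously before taking any union. Everything else — the tail bound itself, the algebraic collapse of the exponent, and the calculus for the monotonicity — is routine.
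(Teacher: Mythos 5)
Your proof is correct and is essentially the standard argument from Jun et al.\ (2018), which is exactly what this paper relies on — the lemma is stated here without proof and attributed to that reference. The sub-gaussian tail bound with the exponent collapsing to $6\delta/(\pi^2 K n^2)$, the union over deterministic pull counts to sidestep the adaptivity of $N_i(t)$, the Basel-sum cancellation, and the calculus for monotonicity (valid for $K\ge 2$ and $\delta\le 1/2$, as you note) all check out.
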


\section{\uppercase{Attack Strategy Against UCB}}
\label{sec:UCBAttack}

In this section, I first give the specification of the UCB algorithm (\cref{algo:ucb}), then propose a near-optimal attack strategy against it (\cref{algo:ucb_optimal_attack}). 

\begin{algorithm}[ht]
\caption{UCB, adapted from \\
\cite{bubeck2012regret}}
\label{algo:ucb}
\begin{algorithmic}
\State For each arm $a$, learner maintains empirical mean $\hatmu_a(t)$ and the number of times $a$ has been pulled $N_a(t)$
\FOR {$t = 1, 2, \dots, K$} 
\STATE Pull each arm $a$ once\;
\STATE Update $\hatmu_a(t+1), N_a(t+1)$
\ENDFOR
\FOR {$t > K$} 
\STATE $a_t = \arg\max_{a} \hat{\mu}_a(t) + 3\sigma\sqrt{\frac{\log t}{N_a(t)}}$\;
\STATE Choose arm $a_t$ and observe reward
\State Update $\hat{\mu}_{a}(t+1), N_{a}(t+1)$\;
\ENDFOR
\end{algorithmic}
\end{algorithm}

\begin{algorithm}[ht]
\caption{Optimal Attack on UCB}
\label{algo:ucb_optimal_attack}
\begin{algorithmic}
\STATE $\beta(n) = \sqrt{\frac{2\sigma^2}{n}\log\frac{\pi^2 K n^2}{3\delta}}$\;
\FOR {$t = 1, 2, \dots$} 
    \IF {$a_t \neq K$} 
        \STATE Adversary observe reward $r_t^0$
        \STATE Compute and inject the smallest corruption $\alpha_t$ with $\alpha_t \ge 0$, such that:
        \[
            \hat{\mu}_{a_t}(t) \le \hat{\mu}_K(t) - 2\beta(N_K(t)) - 3\sigma \cdot \exp(N_{a_t}(t))
        \]
        where $\hat{\mu}_{a_t}(t)$ is the post-attack empirical mean:  
        \[
        \hat{\mu}_{a_t}(t) = (\hat{\mu}_{a_t}(t-1)\cdot N_a(t) + r_t^0  - \alpha_t) / (N_a(t)+1)
        \]
            
    \ENDIF
\ENDFOR
\end{algorithmic}
\end{algorithm}

\subsection{UCB}
The UCB algorithm is summarized in~\cref{algo:ucb} and the specification follows from~\cite{jun2018adversarial}. In the first $K$ rounds, the learner pulls each arm $a$ once to obtain an initial estimate $\hat{\mu}_a$. 
Then in later rounds $t > K$, the learner computes the UCB index for arm $a$ as
\[
    \hat\mu_a(t) + 3\sigma\sqrt{\frac{\log t}{N_a(t)}}. 
\]
The arm with the largest index is then chosen by the learner.

\subsection{Adversarial Attack Strategy}
I now show an optimal attack strategy for the adversary against the UCB algorithm that only spends $\hatO(\sqrt{\log T})$ attack cost. The attack strategy is summarized in~\cref{algo:ucb_optimal_attack}. Recall the goal of the adversary is to manipulate a learner employing the UCB algorithm into choosing the target arm (arm $K$) at least $T- o(T)$ times while keeping the cumulative attack cost low.  

For convenience assume arm $K$ is picked in the first round. \lbcomment{This sentence sounds awkward to me.} The proposed attack strategy works as follows. The adversary only attacks when any non-target arm is pulled, and adds corruption to ensure the difference between the post-attack empirical mean of the pulled arm and the target arm is above a certain gap. Specifically, the attacker ensures that the post-attack empirical means satisfy:
\begin{align}
\hat{\mu}_{a_t}(t) \le \hat{\mu}_K(t) - 2\beta(N_K(t)) - 3\sigma \exp(N_a(t)). 
\label{eq:attackGapUCB}
\end{align}

The key insight is that in order to minimize attack cost, the number of non-target arm pulls should be kept as low as possible. In fact, using the proposed attack strategy guarantees any non-target arm is pulled only $O(\log\log t)$ times for any round $t$. \szcomment{add details}

The main result on the upper bound of the cost of the attack strategy against UCB is given below. Recall $\Delta_a = \mu_a - \mu_K$, $\Delta_a^+ = \max(0, \Delta_a)$.
\begin{theorem}
\label{thm:UCBAttack}
With probability $1- \delta$, for any $T$, using the proposed attack strategy ensures any non-target arm is pulled $O(\log \log T)$ times and total attack cost is $\widehat{O}(K\sigma\sqrt{\log T} + \sum_{a\in [K]}\Delta_a^+)$. 
\end{theorem}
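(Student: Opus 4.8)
The plan is to condition throughout on event $E$, which by the preceding Lemma holds with probability $1-\delta$, and to treat the two assertions separately. The bound on pulls is immediate from the previous Lemma: each non-target arm is pulled at most $\ceil{\log_\theta\sqrt{\log T}}$ times, and since $\log_\theta\sqrt{\log T} = \tfrac{\log\log T}{2\log\theta}$ with $\theta=1.1$ fixed, this is $O(\log\log T)$.

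For the cost, I would first fix a single non-target arm $i$ and account for its total attack cost $C_i$. Writing $\alpha_{i,n}$ for the attack applied on the $n$-th pull of arm $i$ and $V_n = \hat{\mu}_K - 2\beta(N_K) - 3\sigma\theta^n$ for the corresponding target (both evaluated at the round of that pull), the key structural observation is that every $\alpha_{i,n}\ge 0$: the adversary uses the smallest $|\alpha|$, which is $0$ whenever the constraint $\hat{\mu}_i \le V_n$ already holds and is a downward push otherwise. Let $m^\ast$ be the index of the \emph{last} pull at which $\alpha_{i,m^\ast}>0$ (if there is none, $C_i=0$). Since no attack occurs after pull $m^\ast$ and the post-attack empirical mean equals $V_{m^\ast}$ exactly at that binding pull, a telescoping of the partial sums of rewards gives the clean identity $C_i = \sum_{n}\alpha_{i,n} = m^\ast\big(\hat{\mu}^0_i(m^\ast) - V_{m^\ast}\big)$.

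It then remains to bound this expression under $E$. Using $\hat{\mu}^0_i(m^\ast) < \mu_i + \beta(m^\ast)$ and, since arm $K$ is never attacked, $\hat{\mu}_K > \mu_K - \beta(N_K)$, I would obtain $C_i < m^\ast\big(\Delta_i + \beta(m^\ast) + 3\beta(N_K) + 3\sigma\theta^{m^\ast}\big)$. Now bound term by term with $m^\ast \le \ceil{\log_\theta\sqrt{\log T}} = O(\log\log T)$: the dominant contribution is $3 m^\ast\sigma\theta^{m^\ast}$, which is $\hat{O}(\sigma\sqrt{\log T})$ because $\theta^{m^\ast}\le \theta\sqrt{\log T}$; the term $m^\ast\Delta_i$ is $\hat{O}(\Delta_i^+)$; and since $\beta$ is non-increasing the two $\beta$ terms are $\hat{O}(\sigma\sqrt{\log(K/\delta)})$, which is lower order. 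Summing $C_i$ over the $K-1$ non-target arms yields the claimed $\hat{O}(K\sigma\sqrt{\log T} + \sum_{i\neq K}\Delta_i^+)$, uniformly in $T$ since $E$ is defined for all rounds at once.

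The step I expect to be the main obstacle is the cost accounting, i.e.\ establishing the identity $C_i = m^\ast(\hat{\mu}^0_i(m^\ast)-V_{m^\ast})$. The difficulty is that the target $V_n$ drifts between successive pulls through the changing $\hat{\mu}_K$ and $N_K$, and the empirical mean co-evolves with both the random rewards and all previously injected attacks, so a naive per-pull estimate summed over pulls would be both lossy and awkward. The remedy is exactly the ``last binding pull'' reduction: it discards every pull after $m^\ast$ (which cost nothing) and collapses the entire prior attack history into the single quantity $m^\ast(\hat{\mu}^0_i(m^\ast)-V_{m^\ast})$. Making this rigorous requires verifying that all attacks are non-negative and that $\alpha_{i,n}=0$ for $n>m^\ast$, after which the remaining term-by-term bounds are routine.
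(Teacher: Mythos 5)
Your proposal is correct and follows essentially the same route as the paper: both rest on the identity that the cumulative attack on arm $i$ equals $N_i(t)\bigl(\hat{\mu}_i^0(t) - \hat{\mu}_i(t)\bigr)$ with $\hat{\mu}_i(t)$ pinned to the target value at an attacking round, followed by the concentration bounds under $E$ and the $O(\log\log T)$ pull-count lemma. Your ``last binding pull'' refinement is a slightly more careful version of the paper's step (which implicitly assumes the constraint is tight at round $t$), but it is the same argument, not a different one.
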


\begin{proof}[Proof Sketch]
Each time a non-target arm is pulled, the adversary injects corruption so that the gap in~\cref{eq:attackGapUCB} holds. The gap $2\beta(N_K(t)) + 3\sigma \exp(N_a(t))$ consists of two terms. The first term $\beta(N_K(t))$ is a deviation bound that accounts for the estimation error of the true means (see~\cref{lemma:eventE}). The second term grows exponentially with the number of times the current arm is pulled and guarantees that any non-target arm is only pulled for $0.5\log\log T$ times for any round $T$. This in turn implies for any round $t$, the adversary needs only spend $\hatO(\exp(0.5 \log\log T)) = \hatO(\sqrt{\log T})$ attack cost to ensure the gap holds. 
\end{proof}

\begin{remark}
Note that in the actual implementation, the adversary may wish equation~\cref{eq:attackGapUCB} to hold with strict inequality. This can be accomplished by adjusting the attack by an infinitesimal amount. This work will not be concerned with such an issue and simply assumes~\cref {eq:attackGapUCB} holds with equality when the adversary attacks. 
\end{remark}

\section{\MakeUppercase{Attack Strategy Against Thompson Sampling}}
\label{sec:TSAttack}

In this section, I first give the description of the Thompson sampling algorithm (\cref{algo:TS}), then propose a near-optimal attack strategy against it (\cref{algo:attackTS}). 
\begin{algorithm}[h]
\caption{Thompson sampling, adapted from \cite{agrawal2017near}}
\label{algo:TS}
    \begin{algorithmic}
        \State For each arm $a$, learner maintains empirical mean $\hatmu_a(t)$ and the number of times $a$ has been pulled $N_a(t)$
        \FOR {$t = 1, 2, \dots, K$}
            \State Pull each arm $a$ once
            \State Update $\hatmu_a(t+1), N_a(t+1)$
        \ENDFOR
        \FOR {$t > K$}
            \State Sample $\nu_a = \cN(\hatmu_a(t), \frac{1}{N_a(t)})$
            \State Choose $a_t = \arg\max_a \nu_a$ and observe reward
            \State Update $\hatmu_{a}(t+1), N_{a}(t+1)$
        \ENDFOR
    \end{algorithmic}
\end{algorithm}

\begin{algorithm}[h]
\caption{Optimal Attack on Thompson Sampling}
\label{algo:attackTS}
\begin{algorithmic}
\STATE $\beta(n) = \sqrt{\frac{2\sigma^2}{n}\log\frac{\pi^2 K n^2}{3\delta}}$\;
\FOR{$t = 1, 2, \dots$}
    \IF {$a_t \neq K$}
        \State Compute and inject the smallest corruption $\alpha_t$ with $\alpha_t \ge 0$, such that:
        \[\hatmu_{a_t}(t)\le\hatmu_{K} - 2\beta(N_K(t)) - 4\exp(N_{a_t}(t)) - \sqrt{8 \log\frac{\pi^2 K}{3\delta}}
        \]
        where $\hat{\mu}_{a_t}(t)$ is the post-attack empirical mean:  
        \[
        \hat{\mu}_{a_t}(t) = (\hat{\mu}_{a_t}(t-1)\cdot N_a(t) + r_t^0  - \alpha_t) / (N_a(t)+1)
        \]
    \ENDIF
\ENDFOR
\end{algorithmic}
\end{algorithm}

\subsection{Thompson Sampling}
The Thompson Sampling is summarized in~\cref{algo:TS} and the specification is adapted from~\cite{agrawal2017near}. In the first $K$ rounds, the learner pulls each arm $a$ once. Then in later rounds, for each arm $a$, a random variable $\nu_a$ is generated from the distribution $\cN(\hatmu_a(t), \frac{1}{N_a(t)})$. The arm with the largest $\nu_a$ is then chosen.

\subsection{Adversarial Attack Strategy}
I now show an optimal attack strategy for the adversary against the Thompson sampling algorithm, which only spends $\hatO(\sqrt{\log T} )$ attack cost. The attack strategy is summarized in~\cref{algo:attackTS} and shares some similar insights as the attack strategy against the UCB algorithm. Specifically, any non-target arm pulls are also upper bounded by $O(\log\log t)$. This is achieved by injecting corruption whenever non-target arms are pulled to ensure the post-attack empirical means satisfy: 
\[
\hatmu_{a_t}(t)\le\hatmu_{K}(t) - 2\beta(N_K(t)) - 4 \exp(N_{a_t}(t)) - \sqrt{ 8 \log\frac{\pi^2 K}{3\delta}}.  
\]
Similar to the attack on UCB, the gap contains a term $4\exp( N_{a_t}(t) )$ that grows exponentially in the number of times that the non-target arm is pulled. Hence, this exponential term exhibits some generality and may act as a general attack principle for the adversary. 

\begin{theorem}
\label{thm:TSAttack}
With probability $1 - 2\delta$, for any $T$, using the proposed attack strategy ensures any non-target arm is pulled $O(\log\log T)$ times and the total attack cost is $\hatO(K\sqrt{\log T} + \sum_{a\neq [K]} \Delta_a^+ + K (\sigma+1)\sqrt{\log\frac{\pi^2 K}{3\delta} })$. 
\end{theorem}

\begin{remark}
Note that in the attack cost $\hatO(K\sigma\sqrt{\log T} + \sum_{a\neq K}\Delta_a^+)$ on UCB bandits, the $\sqrt{\log T}$ factor is multiplied by $\sigma$, whereas for Thompson sampling the $\sqrt{\log T}$ factor is not. Hence, the attack cost for UCB is expected to grow faster than the attack cost for Thompson sampling as $\sigma$ gets large, and vice versa. This point is in fact illustrated in the numerical experiments in the following sections. 
\end{remark}

\section{\uppercase{Lower Bounds on Attack Cost}}
\label{sec:lowerBounds}
 

In this section, I prove lower bounds on the cumulative attack cost. For a learner employing the UCB or Thompson sampling algorithm, the lower bounds of $\Omega(\sqrt{\log T})$ match the upper bound in the previous section up to $O(\log \log T)$ factors, showing the proposed attack strategy to be near optimal. I also show a lower bound of $\Omega(\log T)$ on the attack cost against the $\varepsilon$-greedy algorithm. I shall focus on the setting where $K = 2$, but the results also generalize to the case where $K > 2$. In this section, the bandit environment consists of two arms giving Gaussian rewards $\cN(\mu_1, \sigma^2), \cN(\mu_2, \sigma^2)$. The mean $\mu_1 > \mu_2$ and the 2nd arm is the target arm. Let $\Delta = \mu_1 - \mu_2$ and let $B$ be a sufficiently large constant. 

The below two theorems characterize the lower bound on attack cost against UCB and Thompson sampling. 
\begin{theorem} [UCB Attack Cost Lower Bound]
\label{thm:UCBLowerBound}
Assume the learner is using the UCB algorithm as in~\cref{algo:ucb}. 
For any $T > B$, if the adversary spend an attack cost less than $\Delta + 0.22\sigma\sqrt{\log {T} - 1}$, then with probability 0.9, the learner pulls the first arm more than $T / 26$ times. 
\end{theorem}
\begin{proof}[Proof Sketch]
Consider the last time the target arm has been pulled, denote this round by $t_0$. At round $t_0$, the UCB index for the non-target arm (the optimal arm) must be lower than that of the target arm. Since $N_2(t_0) = \Theta(T)$:
\[
\hatmu_1(t_0) + \sqrt{\frac{\log T}{N_1(t_0)}} \lesssim \hatmu_2(t_0). 
\]
To drag down the UCB index of the non-target arm to achieve this, an attack cost of 
\[
\left( \Delta + \sqrt{\frac{\log T}{N_1(t_0)}} \right) N_1(t_0) = \Omega(\sqrt{\log T})
\]
is needed. 
\end{proof}

\szcomment{ This can be further lower bounded by:
\[
\frac{1}{B}\exp(\frac{-B^2}{2}) = \frac{1}{0.1\sqrt{\log T}} T^{0.01} > T^{0.005}
\]
}

\szdelete{
\begin{lemma}
Consider rounds $[T/200, T / 2]$. With probability $0.5$, the 1st arm has been chosen $xxx$ times during this period. 
\end{lemma}

\begin{lemma}
Consider rounds $[T/2, T]$. With probability 0.5, the 1st arm has been chosen $xxx$ times during this period. 
\end{lemma}

\begin{proof}
\begin{align*}
\Pr[a_t = 1] &> \Pr[\nu(t) > 0] \\
&= \Pr[ \cN (\hatmu_1(t) - \hatmu_2(t), \frac{1}{N_1(t)} + \frac{1}{N_2(t)} ) > 0 ]\\
&> \Pr[ \cN( \frac{- B}{N_1(t_0)}, \frac{1}{N_1(t_0)} ) > 0 ] \\
&> \Pr[ \cN(\frac{-B}{T^{0.6}}, \frac{1}{T^{0.6}} ) > 0 ]\\
&> \Pr [  ]
\end{align*}
\end{proof}
}

\begin{theorem} [Thompson Sampling Attack Cost Lower Bound]
\label{thm:TSLowerBound}
Assume the learner is using the Thompson Sampling algorithm as in~\cref{algo:TS}. 
For any $T > B$, if the adversary spent an attack cost no more than $\Delta + 0.1\sqrt{\log T}$, then with probability 0.81, the learner pulls the 1st arm more than $T / 10$ times. 
\end{theorem}

\begin{remark}
Note the lower bounds hold for when the adversary knows the time horizon $T$, whereas the upper bounds in the previous section hold for $T$ uniformly over time. Therefore, it would seem the $O(\log\log T)$ factor is the price the adversary has to pay when moving from the fixed $T$ setting to the uniform $T$ setting. 
\end{remark}

\szcomment{
Other Ideas. 
Consider indicator variable. 
Consider the last round and argue the non-target got a bonus. 

At round $t$, consider the gaussian random variable from distribution $N(\hatmu_2 - \hatmu_1, \frac{1}{N_2} + \frac{1}{N_1})$:
\[
\nu(t) = \nu_2(t) - \nu_1(t)
\]
the probability that the non-target arm gets chosen is $Pr[\nu(t) < 0]$. This can be upper bounded by (by increasing the variance): $Pr[N(0, \frac{2}{N_1}) > \hatmu_2 - \hatmu_1]$. Of all rounds $t$ during which target-arm is chosen, let $xxx$ be the round such that $\hatmu_2 - \hatmu_1$ is maximized. 
}

For comparison, I establish a lower bound on the cumulative attack cost against $\varepsilon$-greedy. Interestingly, the $\eps$-greedy exhibit a $\Omega(\log T)$ lower bound, in contrast with the $\Omega(\sqrt{\log T})$ lower bounds for UCB and Thompson sampling. This lower bound shows the attack strategy proposed in~\cite{jun2018adversarial} to be essentially optimal. 

The $\varepsilon$-greedy algorithm works as follows. At each round, the learner with probability $\varepsilon_t$ does uniform exploration, otherwise, the learner does exploitation and chooses the arm with the largest empirical reward. Assume $\varepsilon_t = \nicefrac{cK}{t}$ for some exploration parameter $c$ as in~\cite{auer2002finite} (each arm is chosen for exploration with probability $\nicefrac{c}{t}$ each round). 


\begin{theorem} [$\eps$-greedy Attack Cost Lower Bound]
\label{thm:eps-greedyLowerBound}
Assume the learner is using the $\varepsilon$-greedy algorithm with a learning rate $2c/t$ for some fixed constant $c$. For any $T > B$, if the adversary spent an attack cost no more than $c\cdot \Delta \log T / 6$, then with probability $0.8$, the learner pulls the 1st arm more than $T/ 4$ times.
\end{theorem}

\section{DEFENSES WITH SMOOTHED RESPONSES}
\label{sec:smoothedDefense}
This section studies defense strategies for the learner, and uses the competitive ratio as the benchmark instead of regret. This is because a sublinear regret is not possible in the presence of a strong adversary. Specifically, it is known for any low-regret bandit algorithm, a strong adversary can spend $o(T)$ attack cost and make the learner suffer linear regret (\cite{he2022nearly}). 

This section will prove bounds on the collected rewards of the form
\begin{align*}
\REW \ge (1 - \eps)\OPT - o(T), 
\end{align*}
where $\REW$ is the total reward collected, as measured by the true empirical means, and $\OPT$ is the expected reward of the optimal policy, i.e.:
\[
\REW = \sum_{t=1}^T \mu_{a_t}, \quad \OPT = T\mu^*. 
\] If an algorithm satisfies the above lower bound on $\REW$ (with probability $1 - \delta$), then the algorithm is said to achieve $(1 - \eps$) competitive ratio (with probability $1 - \delta$). In this section, pre-attack rewards and post-attack rewards are assumed to be bounded in $[0,1]$. This assumption is made since the competitive ratio is used as a benchmark; however, note even in this bounded rewards setting, \cite{rangi2022saving} showed that no-regret bandit algorithms are prone to adversarial attacks.

I first begin with a more detailed discussion on the use of competitive ratio as benchmark. Then, I show how concepts from smoothed analysis and behavioral economics can be used to design algorithms that achieve a competitive ratio arbitrarily close to 1 in the presence of a strong adversary. 

\subsection{Discussion on the Use of Competitive Ratio}
The strongest form of guarantee one can hope for is a sublinear regret that scales with $C$. This is possible when the learner faces a \emph{weak} adversary. However, obtaining a sublinear regret is impossible when the adversary is \emph{strong}. 
\begin{fact} (From \cite{he2022nearly})
\label{fact1}
If some algorithm $\ALG$ achieves regret $\Reg(T)$ when $C = 0$, then there exists a strong adversary with budget $C = \Theta(\Reg(T))$ that can make the learner suffer linear regret. 
\end{fact}
From the above fact, if the algorithm were not to suffer linear regret when $C=0$, then there exists a scenario in which the adversary uses a sublinear attack budget and makes the algorithm suffer linear regret. In either case, there is a scenario where the learner must suffer linear regret. Now, since sublinear regret is not possible, the next natural possible benchmark is the competitive ratio. To understand how good this benchmark is, consider the following proposition. 
\begin{prop}
Fix any algorithm $\ALG$. There exists some constant $\eps > 0$ and an attack strategy with a sublinear budget such that the learner achieves no better than $1- \eps$ competitive ratio, specifically,
\[
\lim \inf_{T\rightarrow \infty} ({\REW(T)} / {\OPT(T)}) \le 1 - \eps. 
\]
\end{prop}
This can be seen as a consequence of Fact~\ref{fact1} above. For, considering first when $C = 0$, if the above is not satisfied for any $\eps > 0$ (otherwise we already have our $\eps$), then the conclusion must be $\ALG$ achieves sublinear regret when $C = 0$. Then applying the above Fact~\ref{fact1} gives a suitable attack strategy with a suitable $\eps$. Though using the competitive ratio as benchmark gives us non-trivial robustness guarantees, it is unclear whether there exists a better benchmark under which robustness can be measured. 

\subsection{Smoothed Myopic Response}
Motivated by smoothed analysis, this work proposes the smoothed myopic response as a defense strategy. At each round, let the empirically best arm be $a_t^*$. The response is then a $\rho$-smoothed version of the myopic response. In other words, every arm $a\neq a_t^*$ is pulled with probability $\rho$, and the empirical best arm is pulled with probability $1-(K-1)\rho$. When the learner puts a small constant as exploration probability on each arm, he will eventually discover the best arm as long as the total corruption budget of the adversary is sublinear in $T$. 
\begin{algorithm}[t]
\caption{Smoothed Response}
\label{alg:smoothMyopic}
\label{alg:smoothedResponse}
\begin{algorithmic}
\STATE Input: $\eps$, target competitive ratio is $(1 - \eps)$
\STATE $\rho := \eps / K$
\FOR {$t = 1, 2, \dots, T$}
\STATE Let $a_t^* = \arg\max \hat{\mu}_{a}(t)$
\STATE Let
\begin{equation*}
p_{t,a} = \begin{cases}
\rho &\text{if $a \neq a_t^*$}\\
1 - (K-1)\rho &\text{if $a = a_t^*$}
\end{cases}
\end{equation*}
\STATE Choose arm sampled from $p_t$
\ENDFOR
\end{algorithmic}
\end{algorithm}

\begin{theorem}
\label{thm:smoothMyopic}
Fix any $\eps > 0$, for a sufficiently large $T$, with probability $1- 1/T$, \cref{alg:smoothMyopic} achieves the following:
\[
\REW \ge (1 - \eps)\OPT - o(T). 
\]
\end{theorem}
\begin{proof}[Proof Sketch]
Let $a$ be any suboptimal arm. At any round, arm $a$ has a probability of at least $\rho$ of being chosen. After $T = \Omega(\frac{C}{\rho\Delta_a})$ rounds, arm $a$ has been chosen at least $\Omega(\frac{C}{\Delta_a})$ times and the effect of the corruption diminishes. Specifically, the gap between pre-attack mean and post-attack mean drops below $O(\Delta_a)$. Consequently, after enough rounds (which is sublinear in $T$), the suboptimality is identified and the learner will choose arm $a$ with probability $\rho$. Setting $\rho$ to be a sufficiently small constant achieves a competitive ratio arbitrarily close to 1. 
\end{proof}

\subsection{Quantal Response}
The second response model is motivated by literature on bounded rationality, specifically the quantal response model. At each round, the learner computes the empirical mean and the ratio to the empirically best arm $\psi_{a}(t) = \hatmu_a(t) / \hatmu^*(t)$. The learner then assigns a probability to pull each arm proportional to $\exp(\lambda \psi_a(t))$, where $\lambda$ is a parameter chosen by the learner. Thus, the probability can be interpreted as performing a softmax on the empirical means. In addition, the parameter $\lambda$ controls the `sharpness' of the smoothed distribution, the larger the $\lambda$, the less exploration the learner takes. To illustrate this point, if $\lambda$ is taken to be $+\infty$, the learner always acts myopically, and if $\lambda = 0$, the learner always does uniform exploration. 
\begin{algorithm}[t]
\caption{Quantal Response}
\label{alg:quantalRResponse}
\begin{algorithmic}
\State Input: $\eps$: target competitive ratio is $(1 - \eps)$; 
\State $\lambda := 2\ln\frac{K}{\varepsilon}$
\FOR {$t = 1, 2, \dots, T$}
\State $\hatmu^*(t) = \arg\max_a \hatmu_a(t)$
\STATE $\psi_{a}(t) = {\hatmu_a(t)} / {\hatmu^*(t)}$
\STATE Let $p_{t,a} = { 
\exp(\lambda \psi_{a}(t) )
} / {
\sum_{b} \exp(\lambda \psi_b(t) )
} $
\STATE Choose arm sampled from $p_t$
\ENDFOR
\end{algorithmic}
\end{algorithm}

\begin{theorem}
\label{thm:quantal}
Fix any constant $\eps>0$, for a sufficiently large $T$, with probability $1-1/T$, \cref{alg:quantalRResponse} achieves:
\[
\REW \ge (1 - \eps)\OPT - o(T). 
\]
\end{theorem}

\section{EXPERIMENTS}
\label{section:experiments}

This section describes the numerical simulations on the proposed optimal attack strategies and defense strategies \footnote{Code available at \url{https://github.com/ShiliangZuo/BanditAttack.git}}. 

\begin{table}[t]
\centering
\caption{Comparisons of cumulative attack cost. The first entry `Baseline' runs attack strategy in~\cite{jun2018adversarial} against UCB. Second and third entry runs the proposed attack strategy against UCB and Thompson Sampling (TS). Results on proposed attack strategy all have standard deviation within 1.0 after 10 random trials. }
\label{table:exp}
\vspace{0.1in}
\begin{tabular}{l|l|lll}
\hline
Setting                  & $\sigma$ & $\mu = 0.1$ & $\mu = 1$ & $\mu = 2$ \\ \hline
Baseline          & 0.1      & 23.6        & 129.4     & 247.3     \\
                  & 1        & 114.4       & 241.7     & 360.3     \\
                  & 2        & 239.4       & 367.6     & 475.5     \\ \hline
UCB               & 0.1      & 1.3         & 2.4       & 3.6      \\
                  & 1        & 14.5        & 15.9      & 16.8    \\
                  & 2        & 30.3        & 30.7      & 31.0    \\ \hline
TS & 0.1      & 13.0        & 13.9     & 15.0     \\
                  & 1        & 19.0       & 19.7      & 20.6     \\
                  & 2        & 23.8        & 25.0     & 26.7     \\ \hline
\end{tabular}
\end{table}
\begin{figure}[!t]
\centering
\includegraphics[width=0.5\textwidth]{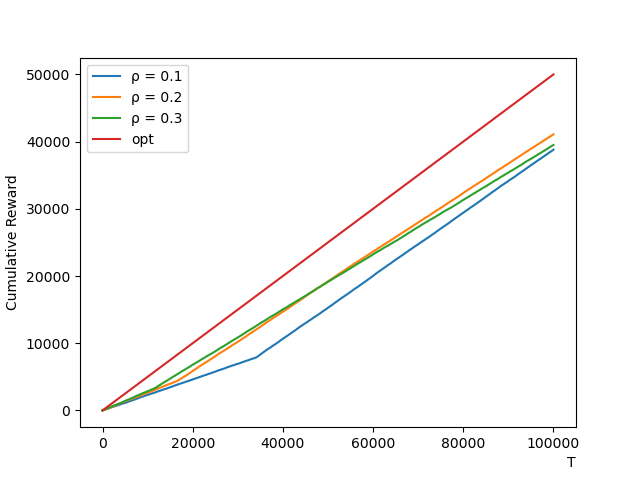}
\includegraphics[width=0.5\textwidth]{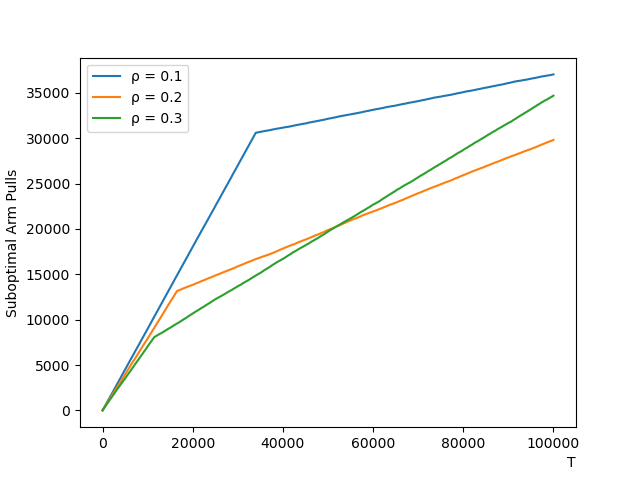}
\caption{Top subfigure shows cumulative reward under smoothed myopic response, bottom subfigure shows the number of suboptimal arm pulls. The learner does more exploration as $\rho$ increases and will be quicker to identify the optimal arm, but the excess exploration hurts performance in the long run. All values have coefficient of deviation within 0.02 after 10 random trials. }
\label{figure:smoothedMyopic}
\end{figure}
\begin{figure}[!t]
\centering
\includegraphics[width=0.5\textwidth]{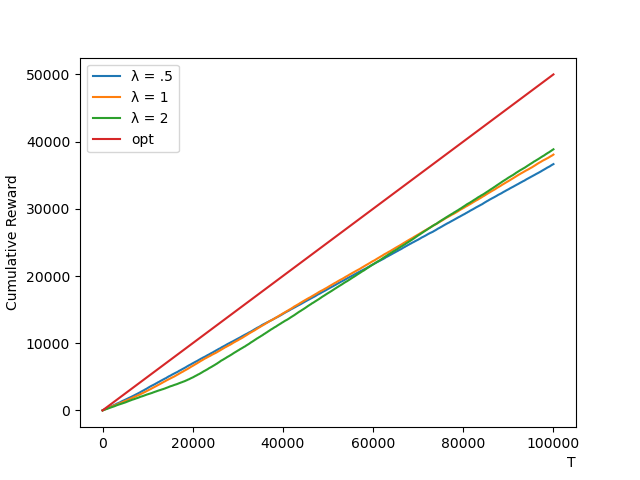}
\includegraphics[width=0.5\textwidth]{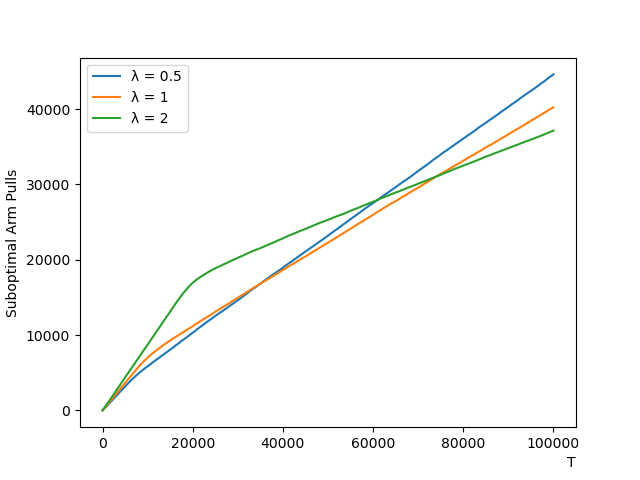}
\caption{Top subfigure shows cumulative reward under quantal response, bottom subfigure shows the number of suboptimal arm pulls. The learner does more exploration as $\lambda$ decreases and will be quicker to detect the optimal arm, but the excess exploration hurts performance in the long run. All values have coefficient of deviation within 0.02 after 10 random trials. }
\label{figure:quantal}
\end{figure}

\subsection{Experiments on Attack Strategies}

In this subsection, I simulate the proposed attack strategies on UCB and Thompson sampling and describe the results of numerical experiments. In the experiments, the bandit instance has two arms, and the reward distributions are $\cN(\mu, \sigma^2)$ and $\cN(0, \sigma^2)$ respectively. The target arm is the second arm. The experiments aim to empirically study how the variance of the reward $\sigma^2$ and the reward gap $\mu$ affect the cumulative attack cost. For both UCB and Thompson sampling, I conduct 9 groups of experiments by varying the parameters of $\sigma \in \set{0.1, 1, 2}$ and $\mu \in \set{0.1, 1, 2}$. In each group, I run 20 trials for the bandit instance with $T = 10^6$. I also run the attack strategy against UCB in~\cite{jun2018adversarial} as a baseline for comparison. 


In the experiments for both UCB and Thompson sampling, any non-target arm is pulled no more than 2 times in any trial, while the target arm is pulled almost every round. This validates the theoretical results, which indicate that any non-target arm gets pulled no more than $0.5\log\log T$ times.

\szdelete{
For both UCB and Thompson sampling, the theoretical results in this work indicate non-target arm is pulled at most $0.5\log\log T$ times; the empirical results validate this, as the non-target arm is pulled no more than 2 times in any trial, while the target arm is pulled almost every round. \lbcomment{chatgpt version: The empirical results validate this, as the non-target is pulled no more than 2 times in any trial, while the target arm is pulled almost every time.}\szcomment{updated}
}

The cumulative attack costs for different choices of $(\mu, \sigma)$ are summarized in~\Cref{table:exp}. For UCB, the empirical results fit nicely with the theoretical bound of $\hatO(\sigma\sqrt{\log T} + \mu)$ in this work (specializing the upper bound on attack cost on UCB to the 2 arm setting, similar for the following bounds). \lbcomment{Ask chatgpt to help you summarize your sentences here; every sentence here is too short, and you should somehow combine them to make them more interesting to read} The results also show a significant improvement over the attack strategy proposed in~\cite{jun2018adversarial}, which had a theoretical bound of $O(\mu\log T + \sigma\log T)$. For Thompson sampling, the empirical results fit nicely with the theoretical bound of $\hatO(\sqrt{\log T} + \mu + \sigma)$. Also note that in the attack cost on UCB bandits, the $\sqrt{\log T}$ factor is multiplied by $\sigma$ but not for Thompson sampling. \lbcomment{Also note that $\sqrt{\log T}$ factor is multiplied by $\sigma$ in the attack cost on UCB bandits but not for Thompson sampling.} Thus, the attack cost for UCB is expected to grow faster than the attack cost for Thompson sampling as $\sigma$ gets large, and vice versa, as shown is~\Cref{table:exp}. \lbcomment{delete last sentence} 

The experiments validate the theoretical results and empirically demonstrate that adding very small corruptions allows the adversary to steer the learner away from the actual optimal arm and manipulates the learner into choosing the adversary's target arm. \lbcomment{Hence, the numerical experiments validate the theoretical results and empirically demonstrate that adding a small corruption allows the adversary to manipulate the learner's choice away from the true optimal option, prompting the learner to select the adversary's desired target arm.
}

\subsection{Experiments on Defense Strategies}

This subsection describes the second set of experiments on the two proposed defense strategies. There are $2$ arms giving Bernoulli rewards. The first arm is the optimal arm and gives rewards with mean $\mu_1 = 0.5$, the second arm is the suboptimal arm and gives rewards with mean $\mu_2 = 0.2$. The time horizon $T = 10^5$ and the corruption budget of the adversary is $C = 10^3$. The adversary adopts the following attack strategy: whenever the learner chooses the first arm (i.e. the optimal arm) and the realized reward is 1, the adversary changes this reward to 0. Figure~\ref{figure:smoothedMyopic} and figure~\ref{figure:quantal} show the cumulative reward as a function of $T$, using the smoothed myopic response and the quantal response respectively. In general, when the learner does more exploration (larger $\rho$ in smoothed myopic response and smaller $\lambda$ in quantal response), he will be quicker to identify the optimal arm. But this excess exploration will hurt performance in the long run.

\section{CONCLUSION}
In this work, I study adversarial attacks that manipulate the behavior of stochastic bandit algorithms by corrupting the reward the learner observes. Both attack and defense strategies are studied. From the adversary's perspective, I give nearly optimal attack strategies. Tight characterizations are given on the attack cost needed to manipulate the UCB, Thompson sampling, and $\eps$-greedy algorithm into pulling some target arm of the adversary's choosing. For UCB and Thompson sampling, I propose optimal attack strategies with attack cost $\hatO(\sqrt{\log T})$ and establish matching lower bounds $\Omega(\sqrt{\log T})$ on the cumulative attack cost. For the $\eps$-greedy algorithm, I give a lower bound of $\Omega(\log T)$ on the attack cost. I also stdudy defense strategies for the learner. Motivated by literature from smoothed analysis and behavioral economics, I give two simple algorithms that achieve a competitive ratio arbitrarily close to 1. 
\lbcomment{Conclusion should be using present tense.}

\lbcomment{Since you mention back epsilon greedy algorithm at the conclusion. you might also want to add a couple of sentences to describe this algorithm compared with UCB. You simply mentioned the name in the Introduction part, which lacks some of the connection in the conclusion.}

\bibliography{allrefs}

\begin{thebibliography}{}

\bibitem[Abramowitz et~al., 1988]{abramowitz1988handbook}
Abramowitz, M., Stegun, I.~A., and Romer, R.~H. (1988).
\newblock Handbook of mathematical functions with formulas, graphs, and mathematical tables.

\bibitem[Agarwal et~al., 2014]{agarwal2014taming}
Agarwal, A., Hsu, D., Kale, S., Langford, J., Li, L., and Schapire, R. (2014).
\newblock Taming the monster: A fast and simple algorithm for contextual bandits.
\newblock In {\em International Conference on Machine Learning}, pages 1638--1646. PMLR.

\bibitem[Agrawal and Goyal, 2012]{agrawal2012analysis}
Agrawal, S. and Goyal, N. (2012).
\newblock Analysis of thompson sampling for the multi-armed bandit problem.
\newblock In {\em Conference on learning theory}, pages 39--1. JMLR Workshop and Conference Proceedings.

\bibitem[Agrawal and Goyal, 2017]{agrawal2017near}
Agrawal, S. and Goyal, N. (2017).
\newblock Near-optimal regret bounds for thompson sampling.
\newblock {\em Journal of the ACM (JACM)}, 64(5):1--24.

\bibitem[Arora et~al., 2012]{arora2012multiplicative}
Arora, S., Hazan, E., and Kale, S. (2012).
\newblock The multiplicative weights update method: a meta-algorithm and applications.
\newblock {\em Theory of computing}, 8(1):121--164.

\bibitem[Auer et~al., 2002]{auer2002finite}
Auer, P., Cesa-Bianchi, N., and Fischer, P. (2002).
\newblock Finite-time analysis of the multiarmed bandit problem.
\newblock {\em Machine learning}, 47(2):235--256.

\bibitem[Bubeck et~al., 2012]{bubeck2012regret}
Bubeck, S., Cesa-Bianchi, N., et~al. (2012).
\newblock Regret analysis of stochastic and nonstochastic multi-armed bandit problems.
\newblock {\em Foundations and Trends{\textregistered} in Machine Learning}, 5(1):1--122.

\bibitem[Camerer, 2011]{camerer2011behavioral}
Camerer, C.~F. (2011).
\newblock {\em Behavioral game theory: Experiments in strategic interaction}.
\newblock Princeton university press.

\bibitem[Chapelle et~al., 2014]{chapelle2014simple}
Chapelle, O., Manavoglu, E., and Rosales, R. (2014).
\newblock Simple and scalable response prediction for display advertising.
\newblock {\em ACM Transactions on Intelligent Systems and Technology (TIST)}, 5(4):1--34.

\bibitem[Dunne and Campbell, 1997]{dunne1997pairing}
Dunne, R.~A. and Campbell, N.~A. (1997).
\newblock On the pairing of the softmax activation and cross-entropy penalty functions and the derivation of the softmax activation function.
\newblock In {\em Proc. 8th Aust. Conf. on the Neural Networks, Melbourne}, volume 181, page 185. Citeseer.

\bibitem[Garcelon et~al., 2020]{garcelon2020adversarial}
Garcelon, E., Roziere, B., Meunier, L., Tarbouriech, J., Teytaud, O., Lazaric, A., and Pirotta, M. (2020).
\newblock Adversarial attacks on linear contextual bandits.
\newblock {\em Advances in Neural Information Processing Systems}, 33:14362--14373.

\bibitem[Gupta et~al., 2019]{gupta2019better}
Gupta, A., Koren, T., and Talwar, K. (2019).
\newblock Better algorithms for stochastic bandits with adversarial corruptions.
\newblock {\em arXiv preprint arXiv:1902.08647}.

\bibitem[Haghtalab et~al., 2020]{haghtalab2020smoothed}
Haghtalab, N., Roughgarden, T., and Shetty, A. (2020).
\newblock Smoothed analysis of online and differentially private learning.
\newblock {\em Advances in Neural Information Processing Systems}, 33:9203--9215.

\bibitem[Han and Scarlett, 2022]{han2022adversarial}
Han, E. and Scarlett, J. (2022).
\newblock Adversarial attacks on gaussian process bandits.
\newblock In {\em International Conference on Machine Learning}, pages 8304--8329. PMLR.

\bibitem[He et~al., 2022]{he2022nearly}
He, J., Zhou, D., Zhang, T., and Gu, Q. (2022).
\newblock Nearly optimal algorithms for linear contextual bandits with adversarial corruptions.
\newblock {\em Advances in Neural Information Processing Systems}, 35:34614--34625.

\bibitem[Jang et~al., 2016]{jang2016categorical}
Jang, E., Gu, S., and Poole, B. (2016).
\newblock Categorical reparameterization with gumbel-softmax.
\newblock {\em arXiv preprint arXiv:1611.01144}.

\bibitem[Jun et~al., 2018]{jun2018adversarial}
Jun, K.-S., Li, L., Ma, Y., and Zhu, J. (2018).
\newblock Adversarial attacks on stochastic bandits.
\newblock In {\em Advances in Neural Information Processing Systems}, pages 3640--3649.

\bibitem[Kannan et~al., 2018]{kannan2018smoothed}
Kannan, S., Morgenstern, J.~H., Roth, A., Waggoner, B., and Wu, Z.~S. (2018).
\newblock A smoothed analysis of the greedy algorithm for the linear contextual bandit problem.
\newblock {\em Advances in neural information processing systems}, 31.

\bibitem[Kuleshov and Precup, 2014]{kuleshov2014algorithms}
Kuleshov, V. and Precup, D. (2014).
\newblock Algorithms for multi-armed bandit problems.
\newblock {\em arXiv preprint arXiv:1402.6028}.

\bibitem[Leme et~al., 2022]{leme2022corruption}
Leme, R.~P., Podimata, C., and Schneider, J. (2022).
\newblock Corruption-robust contextual search through density updates.
\newblock In {\em Conference on Learning Theory}, pages 3504--3505. PMLR.

\bibitem[Li et~al., 2010]{li2010contextual}
Li, L., Chu, W., Langford, J., and Schapire, R.~E. (2010).
\newblock A contextual-bandit approach to personalized news article recommendation.
\newblock In {\em Proceedings of the 19th international conference on World wide web}, pages 661--670.

\bibitem[Liu and Shroff, 2019]{liu2019data}
Liu, F. and Shroff, N. (2019).
\newblock Data poisoning attacks on stochastic bandits.
\newblock In {\em International Conference on Machine Learning}, pages 4042--4050. PMLR.

\bibitem[Luce, 2012]{luce2012individual}
Luce, R.~D. (2012).
\newblock {\em Individual choice behavior: A theoretical analysis}.
\newblock Courier Corporation.

\bibitem[Lykouris et~al., 2018]{lykouris2018stochastic}
Lykouris, T., Mirrokni, V., and Paes~Leme, R. (2018).
\newblock Stochastic bandits robust to adversarial corruptions.
\newblock In {\em Proceedings of the 50th Annual ACM SIGACT Symposium on Theory of Computing}, pages 114--122.

\bibitem[Ma et~al., 2018]{ma2018data}
Ma, Y., Jun, K.-S., Li, L., and Zhu, X. (2018).
\newblock Data poisoning attacks in contextual bandits.
\newblock In {\em Decision and Game Theory for Security: 9th International Conference, GameSec 2018, Seattle, WA, USA, October 29--31, 2018, Proceedings 9}, pages 186--204. Springer.

\bibitem[Ma and Zhou, 2023]{ma2023adversarial}
Ma, Y. and Zhou, Z. (2023).
\newblock Adversarial attacks on adversarial bandits.
\newblock {\em arXiv preprint arXiv:2301.12595}.

\bibitem[McFadden, 1976]{mcfadden1976quantal}
McFadden, D.~L. (1976).
\newblock Quantal choice analaysis: A survey.
\newblock {\em Annals of Economic and Social Measurement, Volume 5, number 4}, pages 363--390.

\bibitem[McKelvey and Palfrey, 1995]{mckelvey1995quantal}
McKelvey, R.~D. and Palfrey, T.~R. (1995).
\newblock Quantal response equilibria for normal form games.
\newblock {\em Games and economic behavior}, 10(1):6--38.

\bibitem[Rangi et~al., 2022]{rangi2022saving}
Rangi, A., Tran-Thanh, L., Xu, H., and Franceschetti, M. (2022).
\newblock Saving stochastic bandits from poisoning attacks via limited data verification.
\newblock In {\em Proceedings of the AAAI Conference on Artificial Intelligence}, volume~36, pages 8054--8061.

\bibitem[Sivakumar et~al., 2020]{sivakumar2020structured}
Sivakumar, V., Wu, S., and Banerjee, A. (2020).
\newblock Structured linear contextual bandits: A sharp and geometric smoothed analysis.
\newblock In {\em International Conference on Machine Learning}, pages 9026--9035. PMLR.

\bibitem[Spielman and Teng, 2004]{spielman2004smoothed}
Spielman, D.~A. and Teng, S.-H. (2004).
\newblock Smoothed analysis of algorithms: Why the simplex algorithm usually takes polynomial time.
\newblock {\em Journal of the ACM (JACM)}, 51(3):385--463.

\bibitem[Wei et~al., 2022]{wei2022model}
Wei, C.-Y., Dann, C., and Zimmert, J. (2022).
\newblock A model selection approach for corruption robust reinforcement learning.
\newblock In {\em International Conference on Algorithmic Learning Theory}, pages 1043--1096. PMLR.

\bibitem[Wu et~al., 2022]{wu2022inverse}
Wu, J., Shen, W., Fang, F., and Xu, H. (2022).
\newblock Inverse game theory for stackelberg games: the blessing of bounded rationality.
\newblock {\em Advances in Neural Information Processing Systems}, 35:32186--32198.

\bibitem[Zhang et~al., 2020]{zhang2020adaptive}
Zhang, X., Ma, Y., Singla, A., and Zhu, X. (2020).
\newblock Adaptive reward-poisoning attacks against reinforcement learning.
\newblock In {\em International Conference on Machine Learning}, pages 11225--11234. PMLR.

\bibitem[Zimmert and Seldin, 2021]{zimmert2021tsallis}
Zimmert, J. and Seldin, Y. (2021).
\newblock Tsallis-inf: An optimal algorithm for stochastic and adversarial bandits.
\newblock {\em The Journal of Machine Learning Research}, 22(1):1310--1358.

\bibitem[Zuo, 2023]{zuo2023corruption}
Zuo, S. (2023).
\newblock Corruption-robust lipschitz contextual search.
\newblock {\em arXiv preprint arXiv:2307.13903}.

\end{thebibliography}

\onecolumn

\aistatstitle{Supplementary Materials}

\section{\MakeUppercase{Proofs: Attack Strategy against UCB}}
This section details the proof of~\Cref{thm:UCBAttack}. I begin with a lemma characterizing the number of pulls for any non-target arm. Recall event $E$ is the event that pre-attack empirical means concentrate around the true mean (\Cref{lemma:eventE}). 

\begin{lemma}
\label{lemma:UCBpulls}
Assume event $E$ holds. At any round $t$, $N_a(t) \le \ceil{0.5\cdot\log{\log t}}$ for any $a \neq K$. 
\label{lemma:loglog-nontarget}
\end{lemma}

\begin{proof}
For sake of contradiction suppose some non-target arm $a$ is pulled more than $\ceil{0.5\cdot\log{\log t}}$ times. After this arm is pulled for the $\ceil{0.5\cdot\log{\log t}}$-th time at round $t_0 < t$, we must have 
\begin{align}
\hat{\mu}_a(t_0) &\le \hat{\mu}_K(t_0) -2\beta(N_K(t_0)) - 3\sigma \cdot \exp(\log\sqrt{\log {t}}) \nonumber\\
&= \hat{\mu}_K(t_0) -2\beta(N_K(t_0)) - 3\sigma \sqrt{\log {t}}. 
\label{eq:attack1}
\end{align}

Now assume arm $a$ has been pulled for the $(\ceil{0.5\log{\log t}} + 1)$-th time in round $t_1 \in [t_0+1,t]$. Then the UCB index of arm $a$ must be higher than that of arm $K$ in round $t_1$. However,
\begin{align*}
    &\hat{\mu}_a(t_1-1) + 3\sigma\sqrt{\frac{\log t_1}{N_{a}(t_1-1)}} \\
    &= \hat{\mu}_a(t_0) + 3\sigma\sqrt{\frac{\log t_1}{N_{a}(t_0)}}\\
    &\le \hat{\mu}_K(t_0) - 2\beta(N_K(t_0)) - 3\sigma \sqrt{\log t} + 3\sigma\sqrt{\frac{\log t_1}{N_a(t_0)}}\\
    &\le \hat{\mu}_K(t_1) -3\sigma\sqrt{\log t} + 3\sigma\sqrt{\frac{\log t_1}{N_a(t_0)}} \\
    &\le \hat{\mu}_K(t_1). 
\end{align*}
The second line follows from the fact that arm $a$ has not been chosen since $t_0$, the third line follows from the design of the attack strategy (specifically~\cref{eq:attack1}), and the fourth line follows from the concentration result given by event $E$. 
The UCB index of arm $a$ is lower than that of arm $K$, hence a contradiction is established, and arm $a$ will not be picked again. 
\end{proof}

\proof (of~\Cref{thm:UCBAttack})
Assume event $E$ holds throughout this proof. By~\cref{lemma:loglog-nontarget}, any non-target arm is pulled $O(\log\log T)$ times. 
Recall $\tau_a(t)$ is the set of timesteps in which arm $a$ was chosen. 

For any $t$,
\[
    \hat{\mu}_a(t) = \frac{\hat{\mu}_a^0(t)N_a(t) - \sum_{s \in \tau_a(t)}{\alpha_s}} {N_a(t)}. 
\]
Also, in round $t$ if the adversary attacked arm $a$, then
\[
    \hat\mu_a(t) = \hat\mu_K(t) - 2\beta(N_K(t)) - 3\sigma e^{N_a(t)}. 
\]
Consequently by the above two equations:
\begin{align*}
\displaybreak
    \frac{1}{N_a(t)}\sum_{s\in \tau_a(s)}\alpha_s &= \hat{\mu}_a^0(t) - \hat\mu_K(t) + 2\beta(N_K(t)) + 3\sigma e^{N_a(t)} \\
    &\le \Delta_a^+ + \beta(N_a(t)) + 3\beta(N_K(t)) + 3\sigma e^{N_a(t)} \\
    &\le \Delta_a^+ + \beta(N_a(t)) + 3\beta(N_K(t)) + 3\sigma e^{0.5\log{\log t} + 1}\\
    &\le \Delta_a^+ + 4\beta(N_a(t)) + 3e\cdot \sigma \sqrt{\log t}. 
 \end{align*}
Here, the third line follows from event $E$, and the last line comes from the fact that $\beta$ is nonincreasing and $N_a(t) < N_K(t)$. 
Thus focusing the attack cost spent on arm $a$:
\begin{align*}
    \sum_{s\in \tau_a(t)} \alpha_s &\le N_a(t)(\Delta_a^+ + 4\beta(N_a(t)) + 3e\cdot \sigma\sqrt{\log t}) \\
    &= \hatO(\sigma\sqrt{\log t} + \Delta_a^+). 
\end{align*}
Summing over all non-target arms, the total attack cost is $\hatO (K\sigma\sqrt{\log T} + \sum_{a\in [K]} \Delta_a^+)$. 
\endproof

\section{\MakeUppercase{Proofs: Attack Strategy against Thompson Sampling}}
This section details the proof of~\Cref{thm:TSAttack}. 
I first give a concentration result that will be useful in this section. Let $\zeta(t) := \sqrt{2 \log\frac{\pi^2 K t^2}{3\delta}}$. Denote the event in the following lemma by $E_1$. 
\begin{lemma}
\label{lem:eventE1}
With probability $1 - \delta$, for any round $t$, we have $\abs{\nu_a(t) - \hatmu_a(t)} < \zeta(t) / N_a(t)$. 
\end{lemma}
\begin{proof}
Fix round $t$ and an arm $a$. Then by a standard Gaussian tail bound:
\begin{align*}
\Pr[\abs {\nu_a(t) - \hatmu_a(t)} > \zeta(t) / N_a(t)] &< 2\exp(-\zeta(t)^2 / 2 ) \\
&= \frac{6\delta}{\pi^2}\cdot \frac{1}{K t^2}
\end{align*}
The lemma then follows from a union bound over $t$ and arms $a$. 
\end{proof}

\begin{lemma}
Assume event $E$ and $E_1$ hold. At any round $t$, arm $a$ is pulled for $\ceil{0.5\cdot \log\log t}$ times for any non-target arm $a\neq K$. 
\end{lemma}
\begin{proof}
Assume for the sake of contradiction arm $a$ is pulled more than $\ceil{0.5\log\log t}$ times. Suppose at round $t_0$ arm $a$ is pulled for the $\ceil{0.5\log\log t}$-th time. After round $t_0$, we must have:
\begin{align}
\hatmu_{a}(t_0) &\le \hatmu_K(t_0) - 2 \beta(N_K(t_0)) - 4 \exp(N_{a}(t)) - \sqrt{8 \log\frac{\pi^2 K}{3\delta}} \nonumber \\
&\le \hatmu_K(t_0) - 2 \beta(N_K(t_0)) - 4 \sqrt{\log t} - \sqrt{ 8 \log\frac{\pi^2 K}{3\delta}} \label{eq:TSGap}
\end{align}

Assume arm $a$ has been chosen again at round $t_1 > t_0$. Then we must have
\[
\nu_a(t_1) > \nu_K(t_1). 
\]
However, 
\begin{align*}
\nu_a(t_1) & < \hatmu_{a}(t_0) + \zeta(t_1)\\
& < \hatmu_{K}(t_0) - 2\beta(N_K(t_0)) - 4 \sqrt{\log t} - \sqrt{8 \log\frac{\pi^2 K}{3\delta}}+ \zeta(t_1) \\
& < \hatmu_K(t_1) - 4 \sqrt{\log t} - \sqrt{8 \log\frac{\pi^2 K}{3\delta}} + \zeta(t_1) \\
& < \hatmu_K(t_1) - \zeta(t_1) \\
& < \nu_K(t_1)
\end{align*}

Here, the first line follows from event $E_1$, the second line follows from the design of the attack strategy (\cref{eq:TSGap}), the third line follows from event $E$, the fourth line follows from the definition of $\zeta$, and the final line follows from event $E_1$ again. Hence, a contradiction is established, and arm $a$ will not be picked again before round $t$. 
\end{proof}

\begin{proof} (of~\Cref{thm:TSAttack})
In a similar fashion as the proof in~\cref{thm:UCBAttack}, the attack cost on arm $a$ can be bounded by:
\begin{align*}
\frac{1}{N_a(t)}\sum_{s\in\tau_a(s)} \alpha_s &\le \hatmu^0_a(t) - \hatmu_K(t) + 2\beta(N_K(t)) + 2.9\exp(N_a(t)) + 2\sqrt{\log\frac{\pi^2 K}{3\delta}} \\
&\le \Delta_a^+ + 4\beta(N_a(t)) + 2.9\sqrt{\log T} + 2\sqrt{\log\frac{\pi^2 K}{3\delta}}. 
\end{align*}

Hence the attack cost on arm $a$ can be bounded as: 
\begin{align*}
\sum_{s\in \tau_a(s)} \alpha_s = \hatO(\Delta_a^+ + \sqrt{\log T} + (\sigma + 1)\sqrt{\log\frac{\pi^2 K }{3\delta}}). 
\end{align*}
Summing over all non-target arms completes the proof. 
\end{proof}

\section{\uppercase{Proofs: Lower Bounds}}
Recall that $\tau_a(t)$ represents the set of timesteps that arm $a$ was chosen before round $t$. Let $C_{a}(t) = \sum_{s\in \tau_a(t)} |\alpha_s|$ denote the cumulative attack cost spent on arm $a$ until round $t$. Note that 
\begin{align}
    \hat{\mu}_a(t) - \frac{C_{a}(t)}{N_a(t)} \le \hat{\mu}^0_a(t) \le \hat{\mu}_a(t) + \frac{C_{a}(t)}{N_a(t)}. 
\label{eq:attackDeviation}
\end{align}

\subsection{UCB Lower Bound}

\begin{proof} (of~\Cref{thm:UCBLowerBound})
Throughout this proof assume event $E$ holds with $\delta = 0.1$. Suppose the non-target arm has been pulled no more than $T / 26$ times. We show the attack cost is at least $\Delta + 0.22\sigma\sqrt{\log T - 1}$. 
Consider the last round the target arm is pulled. Denote this timestep by $t$, then $t > T/2$. Comparing the UCB index we must have
\[
    \hat{\mu}_2(t) + 3\sigma\sqrt{\frac{\log t}{N_2(t)}} > \hat{\mu}_1(t) + 3\sigma\sqrt{\frac{\log t}{N_1(t)}}. 
\]
Therefore by event $E$ and~\cref{eq:attackDeviation}
\begin{align*}
    \mu_2(t) + \beta(N_2(t)) + \frac{C_{2}(t)}{N_2(t)} + 3\sigma\sqrt{\frac{\log t}{N_2(t)}} \\
    > {\mu}_1(t) - \beta(N_1(t)) - \frac{C_{1}(t)}{N_1(t)} + 3\sigma\sqrt{\frac{\log t}{N_1(t)}}. 
\end{align*}
By the fact that $N_1(t) < N_2(t) / 25$:
\[
\sqrt{\frac{\log t}{N_2(t)}} < 0.2\sqrt{\frac{\log t}{N_1(t)}}, 
\]
and we can also verify
\[
\beta(N_2(t)) <  0.29\beta(N_1(t)). 
\]
Hence
\begin{align*}
    &\frac{C_1(t) + C_2(t)}{N_1(t)}\\
    &> \Delta - \beta(N_1(t)) - \beta(N_2(t)) + 3\sigma\sqrt{\frac{\log t}{N_1(t)}} - 3\sigma\sqrt{\frac{\log t}{N_2(t)}} \\
    &\ge \Delta - 1.29\beta(N_1(t)) + 2.8\sigma\sqrt{\frac{\log t}{N_1(t)}} \\
    &= \Delta - 1.29\sqrt{\frac{2\sigma^2}{N_1(t)}\log\frac{2\pi^2 N_1(t)^2}{3\delta}} + 2.8\sigma\sqrt{\frac{\log t}{N_1(t)}}\\
    &\ge \Delta + 0.22\sigma\sqrt{\frac{\log t}{N_1(t)}}. 
\end{align*}
Finally,
\begin{align*}
    C_1(t) + C_2(t) 
    &\ge N_1(t)\Delta + 0.22\sigma\sqrt{N_1(t)\log t} \\
    &\ge \Delta + 0.22\sigma\sqrt{\log t}. 
\end{align*}
This finishes the proof. 
\end{proof}



\subsection{Thompson Sampling Lower Bound}
In the following let $\Pr[\cN(\mu, \sigma^2) > t]$ denote the complementary CDF (tail probability) of a Gaussian random variable with specified mean and variance. 


\begin{lemma} [\cite{abramowitz1988handbook}]
The tail for a Gaussian distribution can be lower bounded by: 
\[
\Pr[\cN(-\mu, 1) > 0] > \sqrt{\frac{2}{\pi}}\frac{\exp(-\mu^2 / 2)}{\mu +\sqrt{\mu^2 + 2}}. 
\]
\label{lemma:gaussinTailLowerBound}
\end{lemma}

\begin{lemma}
\label{lem:PickNontarget-LowerBound}
Let $C$ be the total attack cost. For a round $t$, if $N_1(t) < N_2(t)$, then the probability that the non-target arm gets pulled is at least $\min\left( \Pr[ \cN(\frac{\Delta - C}{N_1(t)}, \frac{1}{N_1(t)}) > 0 ], \nicefrac{1}{2} \right)$. 
\end{lemma}
\begin{proof}
Fix a round $t$. If $\hatmu_1(t) > \hatmu_2(t)$, then $\Pr[a_t = 1] > 1/2$. Now assume $\hatmu_1(t) < \hatmu_2(t)$. 
\begin{align*}
    \Pr[a_t = 1] &= \Pr[\nu(t) > 0] \\
    &= \Pr[ \cN(\hatmu_1(t) - \hatmu_2(t), \frac{1}{N_1(t)} + \frac{1}{N_2(t)}) > 0] \\
    &\ge \Pr [ \cN(\hatmu_1^0(t) - \hatmu_2^0(t) - \frac{C}{N_1(t)}, \frac{1}{N_1(t)} + \frac{1}{N_2(t)}) > 0] \\
    &\ge \Pr[ \cN(\mu_1(t) - \mu_2(t) - \frac{C}{N_1(t)}, \frac{1}{N_1(t)} ) > 0] \\
    &= \Pr[ \cN( \Delta - \frac{C}{N_1(t)}, \frac{1}{N_1(t)} ) > 0 ]\\
    &\ge \Pr [\cN(\frac{\Delta - C}{N_1(t)}, \frac{1}{N_1(t)} ) > 0] 
\end{align*}
Here, the third line is because of \cref{eq:attackDeviation}, the fourth line is because the tail probability cannot increase if we decrease the variance of the Gaussian. 
\end{proof}

\begin{lemma}
Assume the adversary spends an attack cost no more than $C := \Delta + 0.1\sqrt{\log T}$. Then the non-target arm has been chosen $T^{0.8}$ times during period $[T/5, T/2]$. 
\end{lemma}

\begin{proof}
Consider rounds during the period $[T/5, T/2]$. If at any point $t$, $N_1(t) > N_2(t)$, then the non-target arm has already been chosen $T/10$ times. Hence we can assume $N_1(t) < N_2(t)$. 

\begin{align*}
\Pr[a_t = 1] &\ge \Pr[\cN( \frac{\Delta - C}{N_1(t)}, \frac{1}{N_1(t)}) > 0] \\
&\ge \Pr[ \cN({-0.1\sqrt{\log T}}, 1) > 0] \\
&\ge  \frac{1}{\sqrt{2\pi}} \cdot\frac{1}{0.2 \sqrt{\log T}} \cdot \exp( -{0.005\log T} ) \\
&> T^{-0.1}
\end{align*}



The expected number of pulls during this period is at least $0.3 T^{0.9}$. For a sufficiently large $T$, by a simple Hoeffding inequality, with probability 0.9, the learner has chosen the non-target arm for at least $T^{0.8}$ rounds during this period. 
\end{proof}

\begin{proof} (of~\Cref{thm:TSLowerBound})
Assume the adversary spends an attack cost no more than $C := \Delta + 0.1\sqrt{\log T}$. We will show with probability at least $0.8$, the learner chose the non-target arm more than $T/10$ times. By the previous lemma, the learner choose the non-target arm at least $T^{0.8}$ times during the period $[T/5, T/2]$. 

Now, consider rounds during the period $[T/2, T]$. The following holds: \szdelete{The quantity $\nu_1(t) - \nu_2(t)$ is a random variable with mean greater than $\frac{-1}{T^{0.5}}$. Hence applying the Gaussian tail lower bound: }

\begin{align*}
\Pr[a_t = 1] &= \Pr[ \cN( \frac{\Delta - C}{N_1(t)} , \frac{1}{N_1(t)} ) > 0 ] \\
&\ge \Pr[ \cN( \frac{\Delta - C}{T^{0.8}} , \frac{1}{T} ) > 0 ] \\
&\ge \Pr[ \cN (\frac{-0.1\sqrt{\log T}}{T^{0.8}}, \frac{1}{T}) > 0] \\
&\ge \Pr[ \cN(\frac{-0.1}{T^{0.5}}, \frac{1}{T} ) > 0] \\
&\ge \Pr[ \cN({-0.1}, 1 ) > 0] \\
&> 0.4. 
\end{align*}

The expected number of pulls during this period is at least $T/5$. For a sufficiently large $T$, by Hoeffding inequality, with probability $0.9$, the learner chooses the non-target arm at least $T / 10$ times. 

Hence, with a probability of at least 0.8, the learner has chosen the non-target arm at least $T/10$ times. 
\end{proof}

\subsection{$\eps$-greedy Lower Bound}
I first prove a lemma that gives a tight characterization of the number of times each arm is pulled in exploration rounds. 
\begin{lemma}
Fix $\delta \in (0,1)$. Suppose $T$ satisfies $\sum_{t=1}^T \nicefrac{c}{t} \ge 16\log(\nicefrac{4}{\delta})$, then with probability $1 - \delta$, the number of times each arm is pulled during exploration rounds is between $0.5c\log T$ and $2c\log T$. \label{lemma:epsgreedy-explorerounds}
\end{lemma}

\begin{proof}
Fix arm $a$. Let $X_t$ be the indicator variable that takes the value 1 if arm $a$ was pulled in round $t$ as exploration. Then
\begin{align*}
    \Ex[X_t] &= \frac{c}{t}\\
    \Var[X_t] &= \frac{c}{t} (1 - \frac{c}{t}). 
\end{align*}
Then by a Freedmans' style inequality (e.g.~\cite{agarwal2014taming}), for any $\eta \in(0,1)$, with probability $1 - \nicefrac{\delta}{4}$, we have
\begin{align*}
\sum_{t=1}^T (X_t - \frac{c}{t}) &\le \eta \sum_{t=1}^T \Var[X_t] + \frac{\log(4/\delta)}{\eta}\\
&\le \eta \sum_{t=1}^T \Ex[X_t] + \frac{\log(4/\delta)}{\eta}\\
&= \eta \sum_{t=1}^T \frac{c}{t} + \frac{\log (4/\delta)}{\eta}. 
\end{align*}
Choosing $\eta = \sqrt{ \frac{\log(4/\delta)}{\sum_{t=1}^T c/t} }$, we obtain
\begin{align*}
    \sum_{t=1}^T X_t < \sum_{t=1}^T\frac{c}{t} + 2\sqrt{\sum_{t=1}^T \frac{c}{t}\log(4/\delta)}. 
\end{align*}

A lower bound on $\sum_{t=1}^T{X_t}$ is similar by taking the random variables to be $-X_t$ instead of $X_t$ in Freedmans' inequality, and we can show with probability $1 - \nicefrac{\delta}{4}$
\begin{align*}
    \sum_{t=1}^T X_t > \sum_{t=1}^T \frac{c}{t} - 2\sqrt{\sum_{t=1}^T \frac{c}{t} \log(4/\delta)}. 
\end{align*}
Thus with probability $1 - \nicefrac{\delta}{2}$, for $T$ large enough
\[
\frac{1}{2}\sum_{t=1}^T \frac{c}{t} < \sum_{t=1}^T X_t < 2\sum_{t=1}^T \frac{c}{t}. 
\]
The lemma then follows by taking a union bound over the 2 arms. 
\end{proof}

\begin{proof} (of~\Cref{thm:eps-greedyLowerBound})
Throughout this proof assume event $E$ and~\cref{lemma:epsgreedy-explorerounds} holds with $\delta = 0.1$. By a union bound, these events hold simultaneously with probability at least $0.8$. Assume the target arm has been pulled at least $3T/4$ rounds. We will show the adversary spend an attack cost at least $\Omega(\Delta \log T)$. 
Consider the last exploitation round before $T$ in which the learner pulled the 2nd arm, and denote the timestep by $t$. By round $T$, the number of times the 2nd arm was pulled in exploration rounds is at most $2c\log T$. Thus to ensure the 2nd arm is pulled no less than $T - T / 4$ rounds, we must have
\[
    t > T - T/4 - 2c\log T > T / 2. 
\]
In this round, the post-attack mean of the 2nd arm must be higher than that of the 1st arm:
\[
    \hat{\mu}_2(t) > \hat{\mu}_1(t). 
\]
Therefore by event $E$ and~\cref{eq:attackDeviation}:
\begin{align*}
    \mu_2(t) + \beta(N_2(t)) + \frac{C_{2}(t)}{N_2(t)} &> \mu_1(t) - \beta(N_1(t)) - \frac{C_1(t)}{N_1(t)}
\end{align*}
leading to
\begin{align*}
    C_1(t) + C_2(t) &> N_1(t)(\Delta - 2\beta(N_1(t))) \\
    &> c\cdot \Delta \log T/ 6,
\end{align*}
since assuming~\cref{lemma:epsgreedy-explorerounds} holds, $N_1(t) > 0.5c\log T$, and by the assumption on $T$ we have $\Delta > 3\beta(0.5c\log T) > 3\beta(N_1(t))$. 
This finishes the proof. 
\end{proof}
\section{\MakeUppercase{Proofs: Competitive Ratio with Smoothed Responses}}
\begin{lemma}
\label{lemma:smoothedEnoughPulls}
Assume for each arm $a$, the probability of pulling $a$ in each round $t$ can be lower bounded by $\rho$. Then with probability $1 - \delta$, for every $t > \frac{10}{\rho} \cdot\log\frac{K}{\delta}$, the following holds:
\begin{align*}
N_a(t) \ge \frac{\rho t}{2}. 
\end{align*}
Moreover, with probability $1 - 2\delta$, for any $t > t_0 := \max(\frac{10}{\rho}\cdot \log\frac{K}{\delta}, \frac{16 C}{\rho \Delta_a}, \frac{20\log (T/\delta)}{\rho \Delta_a^2})$, the learner has:
\[
\hatmu_a(t) < \hatmu_{a^*} - \Delta_a/2. 
\]
\end{lemma}
\begin{proof}
Fix arm $a$ and round $t$. With probability $1 - \delta$, by Chernoff bound:
\begin{align*}
\Pr[N_a(t) < \frac{\rho t}{ 2 }] &< \exp(-0.25\rho t / 2) 
\end{align*}
Moreover, it is easy to verify that for any $t > \frac{10}{\rho} \cdot\log\frac{K}{\delta}$:
\begin{align*}
\exp(-0.25\rho t / 2) &\le \frac{\delta}{K t^2}. 
\end{align*}
A union bound then completes the first part of the proof. 

Now, fix some suboptimal arm $a$ and let $t > t_0$. Then $N_a(t) > \max(\frac{8C}{\Delta_a}, \frac{10\log T}{\Delta_a^2})$. Consequently by \cref{eq:attackDeviation},
\begin{align*}
\hatmu_a(t) &\le \hatmu^0_a(t) + \frac{C}{N_a(t)} \le \hatmu^0_a(t) + \frac{\Delta_a}{8}, \\
\hatmu_{a^*}(t) &\ge \hatmu^0_{a^*}(t) - \frac{C}{N_a(t)} \ge \hatmu^0_{a^*}(t) - \frac{\Delta_a}{8}. 
\end{align*}
And by event $E$, 
\begin{align*}
\hatmu^0_a(t) &\le \hatmu_a + \sqrt{ \frac{\log (T/\delta)}{N_a(t)} } \le \mu_a + \frac{\Delta_a}{8}, \\
\hatmu^0_{a^*}(t) &\le \hatmu_{a^*} + \sqrt{ \frac{\log (T/\delta)}{ N_a(t)} } \ge \mu_{a^*} - \frac{\Delta_a}{8}. 
\end{align*}
Rearranging the above inequalities completes the proof. 
\end{proof}

\subsection{Proof for Smoothed Myopic Response}


\begin{proof} (of \Cref{thm:smoothMyopic})
Choose $\delta = 1/T$ in \Cref{lemma:smoothedEnoughPulls}. After round $t_0$, any suboptimal arm has a probability $\rho$ of being pulled. Hence the total pulls of some suboptimal arm $a$ can be bounded by: 
\[
O ( \frac{C}{\Delta_a} + \frac{\log T}{\Delta_a^2}) + 2 \rho T. 
\]
The total regret contributed by suboptimal arm $a$ can be bounded by
\[
O({C}+ \frac{\log T}{\Delta_a}) + 2 \rho T \Delta_a \le o(T) + 2\rho T \mu^*. 
\]

The optimal reward is $T\mu^* $. The expected collected reward is:
\begin{align*}
\REW &\ge T\mu^* - \sum_{a} 2 \rho T \mu^* - o(T) \\
&\ge T\mu^* - 2 \rho K  T \mu^* - o(T) \\
&\ge (1 - 2\rho K) T\mu^* - o(T). 
\end{align*}
Therefore setting $\rho = \eps / 2K$ achieves the final regret. 
\end{proof}

\subsection{Proof for Quantal Response}
In the following let $\psi_a(t) = \frac{\hatmu_a(t)}{\hatmu^*(t)}$ as in the algorithm, then $\psi_a(t)\in [0,1]$. Also let $\rho(\lambda) = \frac{1}{K\exp(\lambda)}$. 
\begin{lemma}
For any round $t$ and any arm $a$, $p_{t,a} > \rho(\lambda)$. 
\end{lemma}

\begin{proof}
\begin{align*}
p_{t,a} &= \frac{\exp(\lambda \psi_a(t))}{\sum_b \exp(\lambda \psi_b(t))} \\
&= \frac{\exp(\lambda \psi_a(t))}{\exp(\lambda \psi_a(t)) + \sum_{b\neq a}{\exp(\lambda \psi_b(t)) }} \\
&\ge \frac{1}{ 1 + (K-1)\exp(\lambda)} \\
&\ge \frac{1}{K\exp(\lambda)}. \qedhere
\end{align*}
\end{proof}




\proof (of \Cref{thm:quantal})

Choose $\delta = 1/T$ in \Cref{lemma:smoothedEnoughPulls}. After $t > t_0$, $\hatmu_a(t) < \hatmu_{a^*}(t) - \Delta_a / 2$. Thus 
\[
\psi_a(t) \le \frac{\hatmu_{a^*}(t) - \Delta_a/2}{\hatmu_{a^*}(t)} \le 1 - \frac{\Delta_a / 2}{\mu^* + \Delta_a / 2} \le 1 - \frac{1}{4} \frac{\Delta_a}{\mu^*}. 
\] The probability that arm $a$ gets pulled after $t_0$ is then:
\begin{align*}
p_{t,a} &\le \frac{\exp(\lambda \psi_a(t))}{\exp(\lambda \psi_a(t)) + \exp(\lambda \psi_{a^*}(t)))} \\
&\le \frac{1}{ 1 + \exp(\lambda - \lambda \psi_a(t)) }\\
&\le \frac{1} {2 \exp(\lambda - \lambda \psi_a(t))} \\
&\le \frac{1}{2\exp(\frac{\lambda \Delta_a}{2\mu^*})} \\
&\le \frac{\mu^*}{\lambda \Delta_a}. 
\end{align*}
The total regret contributed by arm $a$ can be bounded by:
\begin{align*}
O(\frac{C}{\rho(\lambda)} + \frac{\log T}{\rho(\lambda) \Delta_a}) + \frac{2\mu^*}{\lambda \Delta_a} \cdot T \cdot\Delta_a &\le o(T) + \frac{2 T \mu^* }{\lambda}. 
\end{align*}
Hence the total collected reward can be bounded by:
\begin{align*}
\REW &\ge T\mu^* -  \frac{2K}{\lambda} T\mu^* - o(T) \\
&\ge (1 - \frac{2K}{\lambda}) T\mu^* - o(T). 
\end{align*}
Choosing $\lambda = \frac{2K}{\varepsilon}$ finishes proof. 
\endproof

\end{document}